\def\argmax{\operatornamewithlimits{argmax}}
\def\argmin{\operatornamewithlimits{argmin}}
\newtheorem{theorem}{Theorem}
\newtheorem{definition}[theorem]{Definition}
\newtheorem{example}[theorem]{Example}
\newtheorem{remark}[theorem]{Remark}
\newenvironment{keywords}{\centerline{\bf\small
Keywords}\begin{quote}\small}{\par\end{quote}\vskip 1ex}
\begin{document}
%%%%%%%%%%%%%%%%%%%%%%%%%%%%%%%%%%%%%%%%%%%%%%%%%%%%%%%%%%%%%%%
%%                    T i t l e - P a g e                    %%
%%%%%%%%%%%%%%%%%%%%%%%%%%%%%%%%%%%%%%%%%%%%%%%%%%%%%%%%%%%%%%%

\title{
\vskip 2mm\bf\Large\hrule height5pt \vskip 4mm
Principles of Solomonoff Induction and AIXI
\vskip 4mm \hrule height2pt}

\author{{\bf Peter Sunehag}$^1$ and {\bf Marcus Hutter}$^{1,2}$\\[1mm]
\texttt{\normalsize \{Peter.Sunehag,Marcus.Hutter\}@anu.edu.au}\\[1mm]
\normalsize $^1$Research School of Computer Science, Australian National University\\[-0.5ex]
\normalsize Canberra, ACT, 0200, Australia\\[-0.5ex]
\normalsize $^2$Department of Computer Science, ETH Z{\"u}rich, Switzerland\\
}

\date{November 2011}

\maketitle

\begin{abstract}
We identify principles characterizing Solomonoff Induction by
demands on an agent's external behaviour. Key concepts are
rationality, computability, indifference and time consistency.
Furthermore, we discuss extensions to the full AI case to
derive AIXI.
\def\contentsname{\centering\normalsize Contents}\setcounter{tocdepth}{1}
{\parskip=-2.7ex\tableofcontents}
\end{abstract}

\begin{keywords}
computability; representation; rationality; Solomonoff induction.
\end{keywords}

\newpage
%%%%%%%%%%%%%%%%%%%%%%%%%%%%%%%%%%%%%%%%%%%%%%%%%%%%%%%%%%%%%%%
\section{Introduction}
%%%%%%%%%%%%%%%%%%%%%%%%%%%%%%%%%%%%%%%%%%%%%%%%%%%%%%%%%%%%%%%

Ray Solomonoff \cite{Sol60} introduced a universal sequence
prediction method that in \cite{Solomonoff96,Hutter07,Sam} is
argued to solve the general induction problem. \cite{Hutter04}
extended Solomonoff induction to the full AI (general
reinforcement learning) setting where an agent is taking a
sequence of actions that may affect the unknown environment to
achieve as large amount of reward as possible. The resulting
agent was named AIXI. Here we take a closer look at what
principles underlie Solomonoff induction and the AIXI agent. We
are going to derive Solomonoff induction from four general
principles and discuss how AIXI follows from extended versions
of the same.

Our setting consists of a reference universal Turing machine
(UTM), a binary sequence (produced by an environment program
(not revealed) on the reference machine) fed incrementaly to
the agent and a loss function (or reward structure). We give
the agent in question the task of choosing a program for the
reference machine so as to minimize the loss. The loss is in
general defined to be a function from a pair of programs, an
environment program and an agent program, to real numbers. The
loss function can be such that it is only the prediction (for a
certain number of bits) produced by the program that matters or
it can care about exactly which program was presented. A loss
function of the latter kind leads to the agent performing the
task of prediction, which is what Solomonoff induction is
primarily concerned with while the latter can be viewed as
identifying an explanatory hypothesis, which is more closely
related to the minimum message length principle
\cite{Wallace68, Wallace99,wallace05} or the minimum
description length principle \cite{rissanen78,
grunwald,rissanen10}. Solomonoff induction is using a mixture
of hypothesis to achieve the best possible prediction. Note
that the fact that we pick one program does not rule out that
the choice is internally based on a mixture. In the case when
the loss only cares about the prediction, the program is only a
representation of that prediction and not really a hypothesis.

The principles are designed to avoid stating what the internal
workings of the agent should be and instead derive those as a
consequence of the demands on the behaviour. Thus we demand
rationality instead of stating explicitly that the agent should
have probabilistic beliefs and we demand time consistency
instead of explicitly stating probabilistic conditioning.  The
computability principle is avoiding saying that the agent
should have a hypothesis class that consists of all computable
environments by instead demanding that it deliver a computation
procedure (a program for our reference machine) that produces
its prediction for the next few bits.The indifference principle
states what the initial preferences of the agent must be, i.e.\
a demand for how the initial decision should be taken. The
choice is based on symmetry with respect to a chosen
representation scheme for sequences, e.g.\ programs on a
reference machine. In other words we do not allow the agent to
be biased in a certain sense that depends on our reference
machine. Informally we state the principles as follows:

\begin{enumerate}
\item {\bf Computability}: If we are going to guess the
    future of a sequence, we should choose a computation
    procedure (a program for the reference machine) that
    produces the predicted bits
\item {\bf Rationality}: We should choose our predicted
    sequence such that the dependence on the priorities
    (formalized by a reward (or loss) structure) is
    consistent.
\item {\bf Indifference}: The initial choice between
    programs only depends on their length and the
    priorities (again formalized by reward (or loss))
\item {\bf Time Consistency}: The choice of program does
    not change by a new observation if the program's output
    is consistent with the oberservation and the reward
    structure is still the same and concerned with the same
    bits
\end{enumerate}

Our reasoning leading from external behavioural principles to a
completely defined internal procedure can be summarized as
follows; The rationality principle tells us that we need to
have probabilistic beliefs over some set of alternatives; The
computability principle tells us what the alternatives are,
namely programs; The indifference principle leads to a  choice
of the original beliefs; The time-consistency principle leads
to a simple procedure for updating the beliefs that the second
principle tells us must exist, namely conditioning.  In total
it leads to Solomonoff Induction.

We can not remove any of the principles without losing the
complete specification of a procedure. The first property is
part of the set up of what we ask the agent to do. Without the
second we lose the restriction that we take decisions based on
maximum expected utility with respect to probabilistic beliefs
and one could then have an agent that always chose the same
program (e.g.\ a very short one).  Without the third principle
we could have any apriori beliefs and without the fourth the
agent could after a while change its mind regarding what
beliefs it started with.

%-------------------------------%
\subsection{Setup}
%-------------------------------%
We are considering a setting where we give an agent a task that
is defined by a reference machine (a UTM), a reward structure
(or loss function if we negate) and a binary sequence that is
presented one bit at a time. The binary sequence is generated
by a program for the reference machine.

The agent must (as stated by the first principle) chose a
program (whose output must be consistent with anything that we
have seen in case we have made observations) for the reference
machine and then use its output (which can be of finite or
infinite length) as a prediction. If we want to predict at
least $h$ bits we have to restrict ourself to machines that
output at least $h$ bits. We will consider an enumeration of
all programs $T_i$. We are also going to consider a class of
reward structures $R_{i,j}$. The meaning is that if we guess
that the sequence is (as the output of) $T_i$ and the actual
sequence is $T_j$, then we receive reward $R_{i,j}$.  Note that
for any finite string there are always Turing machines that
computes it. We will furthermore suppose that $\forall i$,
$R_{i,j}\to 0$ as $j\to\infty$. This means that we consider it
to be a harder and harder task to guess $T_j$ as $j$ gets
really large. This assumption is not strictly necessary as we
will discuss later.

%-------------------------------%
\subsection{Outline}
%-------------------------------%
Section \ref{back} provides background on Solomonoff induction
and AIXI. In Section \ref{rat} we deal with the first two
principles mentioned above about rationality and computability.
In Section \ref{represent}, we discuss the third principle
which defines a prior from a (Universal Turing Machine)
representation. Section \ref{sp} describes the sequence
prediction algorithm that results from adding the fourth
principle to what has been achieved in the previous sections.
Section \ref{aixi} extends our analysis to the case where an
agent takes a sequence of actions that may affect its
environment. Section \ref{semi} concerns equivalence between
our beliefs over deterministic environments and beliefs over a
much larger class of stochastic environments.

%%%%%%%%%%%%%%%%%%%%%%%%%%%%%%%%%%%%%%%%%%%%%%%%%%%%%%%%%%%%%%%
\section{Background}\label{back}
%%%%%%%%%%%%%%%%%%%%%%%%%%%%%%%%%%%%%%%%%%%%%%%%%%%%%%%%%%%%%%%

%-------------------------------%
\subsection{Sequence Prediction}\label{BSP}
%-------------------------------%
We consider both finite and infinite sequences from a finite
alphabet $\mathcal{X}$. We denote the finite strings by
$\mathcal{X}^*$ and we use the notation
$x_{1:t}:=x_1,x_2,...,x_t$ for the first $t$ elements in a
sequence $x$. A function $\rho:\mathcal{X}^*\to[0,1]$ is a
probability measure if
\begin{equation}\label{measure}
  \rho(x)=\sum_{a\in \mathcal{X}} \rho(xa)\ \forall x\in\mathcal{X}^*
\end{equation}
and $\rho(\epsilon)=1$ where $\epsilon$ is the empty string.
Such a function describes a priori probabilistic beliefs about
the sequence. If the equality in \eqref{measure} is instead
$\geq$ and $\rho(\epsilon)\leq 1$ then we have a semi-measure.
We define the probability of seeing the string $a$ after seeing
$x$ as being $\rho(a|x):=\rho(xa)/\rho(x)$. If we have a loss
function $L:\mathcal{X}\times\mathcal{X}\to\mathbb{R}$, we
(\cite{Hutter07}) choose, after seeing the string $x$, to
predict
\begin{equation}
  \argmin_{a\in\mathcal{X}} \sum_{b\in\mathcal{X}}L(a,b)\rho(b|x).
\end{equation}
More generally, if we have an alphabet $\mathcal{Y}$ of actions
we can take and a loss function
$L:\mathcal{Y}\times\mathcal{X}\to\mathbb{R}$ we make the
choice
\begin{equation}
  \argmin_{a\in\mathcal{Y}} \sum_{b\in\mathcal{X}}L(a,b)\rho(b|x).
\end{equation}

%-------------------------------%
\subsection{The Solomonoff Prior}
%-------------------------------%
Ray Solomonoff \cite{Sol60} defined a set of priors that only
differ by a multiplicative constant. We call them Solomonoff
priors. To define them we need to first introduce some notions
about Turing machines \cite{Turing}.

A \emph{monotone Turing machine} $T$ (which we will just call
Turing machine and whose exact technical definition can be
found in \cite{LiV90}) is a function from a set of (binary)
strings to binary sequences that can either be finite or
infinite. We demand that it be possible to describe the
function as a machine with unidirectional input and output
tapes, read/write heads, a bidirectional work tape and a finite
state machine that decides the next action of the machine given
the symbols under the head on the input and work tape. The
input tape is read only and the output tape is write only. We
write that $T(p)=x*$ if output of $T$ starts with $x$ when
given input (\emph{program}) $p$.

A \emph{universal Turing machine} is a Turing machine that can
emulate all other Turing machines in the sense that for every
Turing machine $T$ there is at least one prefix $p$, such that
when $px$ is fed to the universal Turing machine, it computes
the same output as $T$ would when fed $x$ (See
\cite{LiV90,Hutter04} for further details).

A sequence is called \emph{computable} if some Turing machine
outputs it, or in other words, if for every universal Turing
machine there is a program $p$ that leads to this sequence
being the output.

We can also define what we will call a \emph{computable
environment} from a Turing machine. A computable environment is
something which you (an agent) feed an action to and the
environment outputs a string which we call a perception. We can
for example have a finite number of possible actions and we put
one after another on the input tape of the machine. We wait
until the previous input has been processed and one of finitely
many outputs has been produced. The machine might halt after a
finite number of actions have been processed or it might run
for ever.

\begin{definition}[Semi-measure from Turing machine]\label{semim}
Given a Turing machine $T$, we let
\begin{equation}\label{SM}
  \lambda_T(x):=\sum_{p:T(p)=x*} 2^{-l(p)}
\end{equation}
where $l(p)$ is the length of the program (input) $p$ and
$T(p)=x*$ means that $T$ starts with outputting $x$ when fed
$p$, though it might continue and output more afterwards.
\end{definition}

If the Turing machine $T$ in Definition \ref{semim} is
universal we call $\lambda_T$ a Solomonoff distribution.
Solomonoff induction is defined by letting $\rho$ in Section
\ref{BSP} be the Solomonoff prior for some universal Turing
machine. If $U$ is a universal Turing machine and $T$ is any
Turing machine there exists a constant $c>0$ (namely
$2^{-l(q)}$ where $q$ is the prefix that encodes $T$ in $U$)
such that
\begin{equation}\label{dom}
  \lambda_U(x)\geq c\lambda_T(x)\ \forall x\in\mathcal{X}^*.
\end{equation}
The set $\{\lambda_T\ |\ T\ \text{Turing}\}$ can be identified with
\cite{LiV90} with all lower semi-computable semi-measures (see
\cite{LiV90} for definitions and proofs). The property expressed by
\eqref{dom} is called universality (or dominance) and is the key to
proving the strong convergence results of Solomonoff Induction
\cite{Sol78,LiV90,Hutter04,Hutter07}.

%-------------------------------%
\subsection{AIXI}
%-------------------------------%
In the active case where an agent is taking a
sequence of actions to achieve some sort of objective, we are trying
to determine the best \emph{policy} $\pi$, defined as a function
from a history $a_1q_1,...,a_tq_t$ of actions $a_t$ and perceptions
$q_t$ to a choice of the next action $a_{t+1}$. The function $\rho$
from the sequence prediction case is in the active case of the form
$\rho(q_1,...,q_t|a_1,...,a_t)$ and represent the probability of
seing $q_1,...,q_t$ given that we have chosen actions $a_1,...,a_t$.
We can again define a ``learning" algorithm by conditioning on what
we have seen to define
\begin{equation}\label{cond}
  \rho(q_{t+1},...,q_{t+k}|q_1,...,q_t,a_1,...,a_{t+k})
  := \frac{\rho(q_1,...,q_{t+k}|a_1,...,a_{t+k})}{\rho(q_1,...,q_t|a_1,...,a_t)}.
\end{equation}
If $a_t=\pi(a_1q_1,...,a_{t-1}q_{t-1})\ \forall t$ and
$q=q_1,q_2,...$, then we also write $\rho(q|\pi)$ for the left
hand side in \eqref{cond}.

Suppose that we have an enumerated set of policies $\{\pi_i\}$
to choose from. Given a definition of reward $R(q)$ for a
sequence of percepts $q=q_1,q_2,...$ that can for example be
defined as in reinforcement learning by splitting $q_t$ into
observation $o_t$ and reward $r_t$ and using a discounted
reward sum $\sum_t \gamma^tr_t$ \cite{SutBar,Hutter04}, then we
can define
\begin{equation}
  R(\pi):=\mathbb{E}_\rho R(q):=\sum_q R(q)\rho(q|\pi)
\end{equation}
and make the choice
\begin{equation}
  \pi^*:=\argmax_\pi R(\pi).
\end{equation}
If we have a class of environments $\{T_j\}$ (say the
computable environments) and if $\rho$ is defined by saying
that we assign probability $p_j$ to $T_j$ being the true
environment, then we let $R_{i,j}=R(q)$ if $q$ is the sequence
of perceptions resulting from using policy $\pi_i$ in
environment $T_j$. Then $R(\pi_i)=\sum_j p_jR_{i,j}$ and we
choose the policy with index
\begin{equation}
  \argmax_i \sum p_jR_{i,j}.
\end{equation}
As outlined in \cite{Hutter04}, one can choose a Solomonoff
distribution also over active environments. The resulting agent
is referred to as AIXI.

%%%%%%%%%%%%%%%%%%%%%%%%%%%%%%%%%%%%%%%%%%%%%%%%%%%%%%%%%%%%%%%
\section{Choosing a Program}\label{rat}
%%%%%%%%%%%%%%%%%%%%%%%%%%%%%%%%%%%%%%%%%%%%%%%%%%%%%%%%%%%%%%%

In this section we describe the setup of the second principle
mentioned in the introduction, namely rationality. The section
is much briefer than what is suitable for the topic and we
refer the reader to our companion paper \cite{Sunehag11} for a
more comprehensive treatment. Rationality is meant in the sense
of internal consistency \cite{Sugden}, which is how it has been
used in \cite{NeuMor44} and \cite{Sav54}. We set up simple
axioms for a rational decision maker, which implies that the
decisions can be explained (or defined) from probabilistic
beliefs. The approach to probability by \cite{Ram31,deF37} is
interpreting probabilities as fair betting odds. There is an
intuitive similarity between our setup to the idea of
explaining/deriving probabilities as a bookmaker's betting odds
as done in \cite{deF37} and \cite{Ram31}.

Before we consider the question regarding which program we want
to choose we will first consider the question if we are
prepared to accept guessing $T_i$ for a given $R=\{R_{i,j}\}$
(i.e.\ accepting this bet). We suppose that the alternative is
to abstain (reject) and receive zero reward. We introduce
rationality axioms and prove that we must have probabilistic
beliefs over the possible sequences. Note that for any given
$i$, we have a sequence $R_{i,j}$ in $c_0$ (the space of real
valued sequences that converge to $0$). We will set up some
common sense rationality axioms for the way we make our
decisions. We will demand that a decision can be taken for any
reward structure $r$ ($R_{i,j}$ with fixed $i$) from $c_0$. If
$r$ is acceptable and $\lambda\geq 0$ then we want $\lambda r$
to be acceptable since this is simply a multiple of the same.
We also want the sum of two acceptable reward structures to be
acceptable. If we cannot lose (receive negative reward) we are
prepared to accept while if we are guaranteed to gain we are
not prepared to reject it. We cannot remove any axiom without
losing the conclusion.

\begin{definition}[Rationality]\label{rational}
Suppose that we have a function $z:c_0\to\{-1,1,0\}$ defining
the decision reject/accept/either $(-1/1/0)$ and $Z=\{r\in c_0\
|\ z(r)\in \{0,1\}\}$.
\begin{enumerate}
\item $z(r)\in \{0,1\}$ if and only if $z(-r)\in\{-1,0\}$
\item $r,s\in Z$, $\lambda,\gamma \geq 0$ then $\lambda r+\gamma s\in Z$
\item If $r_k\geq 0\ \forall k$ then $r\in Z$ while
if $r_k>0\ \forall k$ then $z(r)=1$.
\end{enumerate}
\end{definition}

The following theorem connects our Rationality axioms with the
Hahn-Banach theorem \cite{Krey89} and concludes that rational
decisions can be described with a positive continuous linear
functional on the space of reward structures. The Banach space
dual of $c_0$ is $\ell_1$ which gives us a probabilistic
representation of underlying beliefs.

\begin{theorem}[Linear separation]\label{thm:LF}
Given the assumptions in Definition \ref{rational} there exists
a positive continuous linear functional $f:c_0\to\mathbb{R}$
defined by $f(r)=\sum_j r_jp_j$ where $r=\{r_j\}$, $p_j\geq 0$
and $\sum_j p_j<\infty$, such that
\begin{equation}
  \{x\ |\ f(r)> 0\}\subseteq Z\subseteq\{r\ |\ f(r)\geq 0\}.
\end{equation}
\end{theorem}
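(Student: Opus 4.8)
The plan is to read off the geometric content of the axioms and then invoke Hahn--Banach. First I would record what Definition~\ref{rational} says about $Z$: axiom~2 makes $Z$ a convex cone, and axiom~3 gives $c_0^+:=\{r\mid r_k\ge 0\ \forall k\}\subseteq Z$ together with the strict statement that $r_k>0\ \forall k$ forces $z(r)=1$. From axiom~1 I would extract three facts: $z(0)=0$ (since $z(0)\in\{0,1\}$ and $z(-0)=z(0)\in\{-1,0\}$); the symmetry $\{z=-1\}=-\{z=1\}$; and, most importantly, the covering property $Z\cup(-Z)=c_0$, because $r\notin Z$ means $z(r)=-1$, hence $z(-r)=1$ and $-r\in Z$. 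Finally $Z\neq c_0$: otherwise $z$ never takes the value $-1$, so by the symmetry it never takes the value $1$, contradicting the strict part of axiom~3 applied to e.g.\ $(2^{-k})_k$.

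Granting a nonzero continuous linear functional $f$ that is nonnegative on $Z$, the rest is immediate. Because $c_0^+\subseteq Z$, $f$ is positive, and the duality $c_0^*\cong\ell_1$ writes $f(r)=\sum_j r_jp_j$ with $p_j=f(e_j)\ge 0$ and $\sum_j p_j=\sum_j|p_j|=\|f\|<\infty$, which is exactly the claimed form. The inclusion $Z\subseteq\{r\mid f(r)\ge 0\}$ is just $f\ge 0$ on $Z$. For $\{r\mid f(r)>0\}\subseteq Z$ I would use the covering property: if $f(r)>0$ but $r\notin Z$, then $-r\in Z$, so $f(-r)\ge 0$, i.e.\ $f(r)\le 0$, a contradiction.

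The real work is producing $f$. The obstacle is that the natural candidate for a geometric separation, the cone $Z$, has empty interior in $c_0$ (the positive cone $c_0^+$ already does, since every strictly positive element of $c_0$ tends to $0$), and the axioms do not tell us $Z$ is closed, so the elementary ``open set'' and ``point outside a closed convex set'' forms of Hahn--Banach do not apply off the shelf. To get around this I would work order-theoretically rather than topologically: the lineality space $M:=Z\cap(-Z)$ is a linear subspace, the image of $Z$ in $c_0/M$ is a proper convex cone whose positive and negative parts still cover the quotient, and this yields a translation-invariant total preorder from which a separating linear functional can be built by a Hahn--Banach extension argument seeded on $M$ and a single direction transverse to it. Crucially, the functional so obtained is nonnegative on $c_0^+$, hence a positive linear functional on the Banach lattice $c_0$, and positive linear functionals on Banach lattices are automatically continuous; this is what upgrades the algebraically constructed $f$ to an element of $\ell_1$ and closes the gap created by the missing interior. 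I expect verifying that the order structure genuinely produces a single continuous functional (rather than a merely lexicographic, non-representable order) to be the delicate point, and I would lean on the containment $c_0^+\subseteq Z$ and the automatic-continuity theorem to rule out the pathological cases.
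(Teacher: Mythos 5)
Your first two paragraphs are fine, and your endgame is actually tidier than the paper's: positivity of $f$ from $c_0^+\subseteq Z$, the $\ell_1$ representation, and the derivation of $\{f>0\}\subseteq Z$ from the covering property $Z\cup(-Z)=c_0$ are all correct. You have also put your finger on the weak point, because the paper's own proof is precisely the ``off the shelf'' argument you reject: it separates the interiors of $Z$ and $-Z$ by Hahn--Banach, writing $\mathbb{R}^m$ throughout, and never verifies that these interiors are nonempty. In $\mathbb{R}^m$ that is automatic ($Z$ contains the positive orthant, which has interior), so the paper's argument is really a finite-dimensional proof; in $c_0$ the axioms supply neither interior points nor closedness of $Z$. (One quibble: your parenthetical claim that $Z$ itself has empty interior because $c_0^+$ does is a non sequitur --- a superset of a set with empty interior can have interior --- but the relevant point, that nothing guarantees an interior point, stands.)

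The genuine gap is your third paragraph: it is a plan, not a proof, and the step you yourself flag as delicate --- excluding a ``merely lexicographic, non-representable order'' --- is not delicate but impossible, because axioms 1--3 alone do not imply the conclusion. Fix a free ultrafilter $\mathcal{U}$ on $\mathbb{N}$ and set $\phi(r)=\lim_{\mathcal{U}}2^k r_k\in[-\infty,+\infty]$; this $\phi$ is positively homogeneous and additive whenever the sum avoids $\infty-\infty$, so $\{\phi>0\}$ is a convex cone and $N=\{\phi=0\}$ is a subspace. By Zorn's lemma choose a maximal convex cone $Z'\subseteq N$ containing $N\cap c_0^+$ and disjoint from $\{-u\,:\,u\in N,\ u_k>0\ \forall k\}$; maximality forces $Z'\cup(-Z')=N$ (if neither $r$ nor $-r$ could be adjoined, the two resulting strictly negative combinations $z+\lambda r$ and $z'-\mu r$ give $\mu z+\lambda z'$ both in $Z'\cup\{0\}$ and strictly negative, a contradiction). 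Put $Z=\{\phi>0\}\cup Z'$, with $z(r)=1$ if $r\in Z$ and $-r\notin Z$, $z(r)=0$ if both $\pm r\in Z$, and $z(r)=-1$ if $r\notin Z$. All three axioms hold. Yet for any $p\in\ell_1$ with $p_j\geq0$ and some $p_{j_0}>0$, choose $K$ with $\sum_{k>K}p_k<p_{j_0}$ and let $r_{j_0}=-1$, $r_k=1/k$ for $k>K,\ k\neq j_0$, $r_k=0$ otherwise: then $2^kr_k\to\infty$ off $j_0$, so $\phi(r)=+\infty$ and $r\in Z$, while $\sum_j r_jp_j\leq -p_{j_0}+\sum_{k>K}p_k<0$. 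Hence no nonzero positive continuous functional is nonnegative on this $Z$, and automatic continuity cannot rescue you: it upgrades a positive functional you already have, whereas existence is exactly what fails. The theorem (read, as intended, with $f\neq0$) needs an additional hypothesis --- e.g.\ closedness of $Z$, or an Archimedean/continuity axiom on $z$ --- and this defect is shared by the paper's own proof; your route cannot be completed because no route can.
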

\begin{proof}
The second property tells us that $Z$ and $-Z$ are convex
cones. The first and third property tells us that $Z\neq
\mathbb{R}^m$. Suppose that there is a point $r$ that lies in
both the interior of $Z$ and of $-Z$. Then the same is true for
$-r$ according to the first property and for the origin. That a
ball around the origin lies in $Z$  means that $Z=\mathbb{R}^m$
which is not true. Thus the interiors of $Z$ and $-Z$ are
disjoint open convex sets and can, therefore, be separated by a
hyperplane (according to the Hahn-Banach theorem) which goes
through the origin (since according to the first and third
property $z(0)=0$). The first property tell us that $Z\cup
-Z=\mathbb{R}^m$. Given a separating hyperplane (between the
interiors of $Z$ and $-Z$), $Z$ must contain everything on one
side. This means that $Z$ is a half space whose boundary is a
hyperplane that goes through the origin and the closure
$\bar{Z}$ of $Z$ is a closed half space and can be written as
$\{r\ |\ f(r)\geq 0\}$ for some $f$ in the Banach space dual
$c_0'=\ell_1$ of $c_0$. The third property tells us that $f$ is
positive.
\end{proof}

Theorem \ref{thm:LF} also leads us to how to choose between
different options. If we consider picking $T_{i}$ over $T_k$ we
will do (accept) that if $R_{i,\cdot}-R_{k,\cdot}$ is accepted.
This is the case if $\sum p_jR_{i,j}>\sum p_jR_{k,j}$. The
conclusion is that if we are presented with $R_{i,j}$ and a
class $\{T_j\}$ and we assign probability $p_j$ to $T_j$ being
the truth, then we choose

\begin{equation}
  \argmax_i \sum_j R_{i,j}p_j.
\end{equation}

\begin{remark}\label{linf}
If we replace the space $c_0$ by $\ell_\infty$ as the space of
reward structures in Theorem \ref{thm:LF}, the conclusion (see
\cite{Sunehag11}) is instead that $f$ is in the Banach space
dual $\ell_\infty'$ of $\ell_\infty$ which contains $\ell_1$
(the countably additive measures) but also functions that
cannot be written on the form $f(r)=\sum_j r_jp_j$.
$\ell_\infty'$ is sometimes called the ba space
\cite{Diestel84} and it consists of all finitely additive
measures.
\end{remark}

%%%%%%%%%%%%%%%%%%%%%%%%%%%%%%%%%%%%%%%%%%%%%%%%%%%%%%%%%%%%%%%
\section{Representation}\label{represent}
%%%%%%%%%%%%%%%%%%%%%%%%%%%%%%%%%%%%%%%%%%%%%%%%%%%%%%%%%%%%%%%

In this section we will discuss how indifference together with
a representation leads to a choice of prior weights. The
representation will be given in terms of codes that are strings
of letters from a finite alphabet and it tells us which
distinctions we will apply our indifference principle to.
Choosing the first bit can be viewed as choosing between two
propositions, e.g.\ $x$ is a vegetable or $x$ is a fruit. More
choices follow until a full specification (a code word for the
given reference machine) is reached. The section describes the
usual material on the Solomonoff distribution (see
\cite{LiV90}) in a way that highlights in what sense it is
based on indifference. The indifference principle itself is an
external behavioural principle.

\begin{definition}[Indifference]
Given a reward structure for two alternative outcomes of an
event where we receive $R_1$ or $R_2$ depending on the outcome,
then if we are indifferent we accept this bet if $R_1+R_2>0$.
For an agent with probabilistic beliefs that maximize expected
utility this means that equal probability is assigned to both
possibilities.
\end{definition}

We will discuss examples that are based on considering the set
$\{$apple, orange, carrot$\}$ and the representation that is
defined by first separating fruit from vegetables and then the
fruits into apples and oranges.

\begin{example}
We are about to open a box within which there is either a fruit
or a vegetable. We have no other information (except possibly,
a list of what is a fruit and what is a vegetable).
\end{example}

\begin{example}
We are about to open a box within which there is either an
apple, or an orange or a carrot. We have no other information.
\end{example}

Consider a representation where we use binary codes. If the
first digit is a $0$ it means a vegetable, i.e. a carrot. No
more digits are needed to describe the object. If the first
digit is a $1$ it means a fruit. If the next digit after the
$1$ is a $0$ its an apple and if it is a $1$ its an orange. In
the absence of any other background knowledge/information and
given that we are going to be indifferent for this choice, we
assign uniform probabilities for each choice of letter in the
string. For our examples this results in probabilities
$Pr($fruit$)=Pr($vegetable$)=1/2$. After concluding this we
consider the next distinction and conclude that
$Pr($apple$|$fruit$)=Pr($orange$|$fruit$)=1/2$. This means that
the decision maker has the prior beliefs $Pr($carrot$)=1/2$,
$Pr($apple$)=Pr($orange$)=1/4$.

An alternative representation would be to have a trinary
alphabet and give each object its own letter. The result of
this is $Pr($apple$)=Pr($orange$)=Pr($carrot$)=1/3$,
$Pr($fruit$)=2/3$ and $Pr($vegetable$)=1/3$.

The following formalizes the definition of a code and a prefix
free code. Since we are assuming that the possible outcomes are
never special cases of each other we need our code to be prefix
free. Furthermore, Kraft's inequality says that
$\sum_{c\in\mathcal{C}} 2^{-length(c)}\leq 1$ if the set of
codes $\mathcal{C}$ is prefix free.

\begin{definition}[Codes]
A code for a set $\mathcal{A}$ is a set of strings
$\mathcal{C}$ of letters from a finite alphabet $\mathcal{B}$
and a surjective map from $\mathcal{C}$ to $\mathcal{A}$. We
say that a code is prefix-free if no code string is a proper
prefix of another.
\end{definition}

\begin{definition}[Computable Representation]
We say that a code is a computable representation if the map
from code-strings to outcomes is a Turing machine.
\end{definition}

In the definition below we provide the formula for how a binary
representation of the letters in an alphabet leads to a choice
of a distribution. It is easily extended to non-binary
representations.

\begin{definition}[Distribution from representation]\label{prior}
Given a binary prefix-free code for $\mathcal{A}$ (our possible
outcomes), the expression
$$
  w_a=\sum_{\text{c code for a}}2^{-length(c)},\ a\in\mathcal{A}
$$
defines a measure over $\mathcal{A}$.
\end{definition}

Though the formula in Definition \ref{prior} uniquely
determines the weights given a representation, there is still a
very wide choice of representations. We are going to deal with
this concern to restrict ourself to the class of universal
representations with the property that given any other
computable representation, the universal weights are at least a
constant times the weights resulting from the other
representation. See \cite{Sol60,LiV90,Hutter04} for a more
extensive treatment. These universal representations are
defined by having a universal Turing machine (in our case the
given reference machine) as the map from codes to outcomes.

\begin{definition}[Universal Representation]
If a universal Turing machine is used for defining the map from
codes to outcomes we say that we have a universal (computable)
representation.
\end{definition}

The weights that result from using a universal representation
$w^U_a$ satisfy the property that if $w_a$ are the resulting
weights from another computable representation, then there is
$C>0$ such that $w^U_a\geq C w_a\ \forall a\in\mathcal{A}$.
This follows directly from the universality of the Turing
machine, which means that any other Turing machine can be
simulated on the universal one by adding an extra prefix
(interpreter) to each code. That is, feeding $ic$ to the
universal machine gives the same output as feeding $c$ to the
other machine. The constant $C$ is $2^{-length(i)}$.

\begin{theorem}
Applying Definition \ref{prior} together with a representation
of finite strings based on a universal Turing machine gives us
the Solomonoff semi-measure.
\end{theorem}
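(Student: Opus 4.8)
The plan is to show that the weights $w_a^U$ produced by Definition~\ref{prior} coincide exactly with the Solomonoff semi-measure $\lambda_U$ of Definition~\ref{semim}, and then to verify that $\lambda_U$ is indeed a semi-measure. The two formulas are already visibly the same: Definition~\ref{prior} sets $w_a = \sum_{c\ \text{code for }a} 2^{-\mathrm{length}(c)}$, and Definition~\ref{semim} sets $\lambda_T(x) = \sum_{p:\,T(p)=x*} 2^{-l(p)}$. So the first step is purely to match the two sets being summed over: when the map from codes to outcomes is the universal Turing machine $U$ (as demanded by the Universal Representation definition), the codes $c$ mapping to the outcome ``the string begins with $x$'' are precisely the programs $p$ with $U(p) = x*$. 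This identification is the conceptual heart of the argument, and I would spell out that here the relevant ``outcome'' $a$ for sequence prediction is the event that the sequence starts with a given finite string $x$, so that $\mathcal{A}$ is indexed by $\mathcal{X}^*$.

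\medskip
First I would fix the interpretation of the reference machine $U$ as defining the code-to-outcome map, where a code string $p$ is assigned to the outcome determined by the prefix $x$ of $U(p)$. Then I would argue that for each finite string $x$, the collection of codes mapped (in the relevant sense) to the event ``output begins with $x$'' is exactly $\{p : U(p) = x*\}$, so the two defining sums agree term by term and hence $w_x^U = \lambda_U(x)$. The mild subtlety to flag is that $U$ is a monotone machine whose programs need not be self-delimiting in the strict prefix-free sense of the earlier \textbf{Codes} definition; the monotone structure is what makes the $2^{-l(p)}$ weighting well behaved, and I would note that the relevant prefix-freeness for monotone machines is exactly what yields Kraft's inequality and the semi-measure (rather than measure) property.

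\medskip
The remaining step is to confirm that the resulting function satisfies the semi-measure conditions from \eqref{measure}, i.e.\ $\lambda_U(x) \ge \sum_{a\in\mathcal{X}} \lambda_U(xa)$ and $\lambda_U(\epsilon)\le 1$. The inequality follows because any program $p$ contributing to some $\lambda_U(xa)$ (that is, with $U(p)=xa*$) also has $U(p)=x*$ and so contributes to $\lambda_U(x)$, while programs that output $x$ but then halt or fail to extend contribute to the left side without appearing in any term on the right; the monotone read-only-input structure guarantees the contributing programs on the right are disjoint across distinct $a$. The normalization $\lambda_U(\epsilon)\le 1$ is Kraft's inequality applied to the programs. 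I expect the main obstacle to be not any hard calculation but rather stating precisely the correspondence between ``codes for outcome $x$'' and ``programs with $U(p)=x*$'' so that it is genuinely a bijection respecting lengths; once that bijection is pinned down the equality $w_x^U=\lambda_U(x)$ and the semi-measure property both drop out, so I would devote most of the writeup to making that identification airtight rather than to the inequalities.
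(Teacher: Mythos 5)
Your proposal is correct and takes essentially the same route as the paper: both proofs rest on identifying the codes for a string $x$ with the programs $p$ satisfying $U(p)=x*$, so that the sum in Definition \ref{prior} coincides term by term with the sum in Definition \ref{semim}. The one organizational point to note is that the paper fixes a horizon $h$ and takes the outcomes to be strings of length exactly $h$ (so each program codes a single outcome, as the Codes definition literally requires) and then varies $h$, whereas your indexing of outcomes by all of $\mathcal{X}^*$ at once makes the code-to-outcome map multivalued; your added verification of the semi-measure inequalities and the remark on monotone-machine prefix-freeness are sound refinements that the paper leaves implicit.
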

\begin{proof}
Given a universal Turing machine $U$ we create a set of codes
$\mathcal{C}$ from all programs that generate an output of at
least $h$ bits. We let the code $c\in\mathcal{C}$ represent the
finite string $x\in\mathcal{X}^*$ with $l(x)=h$ if $U(c)=x*$.
We show below that this representation together with Definition
\ref{prior} leads to the Solomonoff distribution for the next
$h$ bits. By considering all $h\geq 1$ we recover the
Solomonoff semi-measure over $\mathcal{X}^*$.

Formally, given $x\in\mathcal{X}^*$ we let (in Definition
\ref{prior}) $a=x$ and we define $\rho(x):=w_a$ and conclude
that
$$
  \rho(x)=\sum_{U(p)=x*}2^{-length(p)}
$$
which is the Solomonoff semi-measure.
\end{proof}

\begin{remark}[Unique Representation]\label{unique}
Given a universal Turing machine, we could choose to let only
the shortest program that generates a certain output represent
that output, and not all the programs that generate this
output. The length of the shortest program $p$ that gives
output $x$ is called the Kolmogorov complexity $K(x)$ of $x$.
Using only the shortest program leads to the slightly different
weights
$$
  w_x=2^{-K(x)}
$$
compared to Definition \ref{prior}. Both weighting schemes are,
however, equivalent within a multiplicative constant
\cite{LiV90}.
\end{remark}

%%%%%%%%%%%%%%%%%%%%%%%%%%%%%%%%%%%%%%%%%%%%%%%%%%%%%%%%%%%%%%%
\section{Sequence Prediction}\label{sp}
%%%%%%%%%%%%%%%%%%%%%%%%%%%%%%%%%%%%%%%%%%%%%%%%%%%%%%%%%%%%%%%

We will in this section summarize how Solomonoff Induction as
described in \cite{Hutter07} follows from what we have
presented in Section \ref{rat} and Section \ref{represent}
together with our fourth principle of time consistency.
Consider a binary sequence that is revealed to us one bit at a
time. We are trying to predict the future of the sequence,
either one bit, several bits or all of them. By combining the
conclusions of Section \ref{rat} and \ref{represent}, we can
define a sequence prediction algorithm which turns out to be
Solomonoff Induction. The results from Section \ref{rat} tells
us that if we are going to be able to make rational guesses
about which computable sequence we will see, we need to have
probabilistic beliefs.

If we are interested in predicting a finite number of bits we
need to design the reward structure in Section \ref{rat} to
reflect what we are interested in. If we want to predict the
next bit we can let $R_{i,j}=1$ if $T_i$ and $T_j$ have the
same next bit and $R_{i,j}=-1$ otherwise. This leads to (a
weighted majority decision to) predicting $1$ if $\sum_{j| T_j\
\text{produces}\ 1} p_j>\sum_{j| T_j\ \text{produces}\ 0}p_j$
and $0$ if the reverse inequality is true. The reasoning and
result generalizes naturally to predicting finitely many bits
and we can interpret this as minimizing the expected number of
errors.

%-------------------------------%
\subsection{Updating}
%-------------------------------%
Suppose that we have observed a number of bits of the
sequences. This result in contradictions with many of the
sequences and they can be ruled out. We next formally state the
fourth principle from the introduction.

\begin{definition}[Time-consistency]
Suppose that we are observing a sequence $x_1,x_2,...$ one bit
at a time ($x_t$ at time $t$). Suppose that we (at time $t$)
want to predict the next $h$ bits of a sequence and our
decisions (for any $t$ and $h$) are defined by a function
$z^t_h$ from the set of all reward structures
($\mathbb{R}^{m\times m}$ where $m=2^h$ in the binary case) to
the set of strings of length $h$.

Suppose that if $z^t_{h+1}(r)=y$ and $y$ starts with $x_{t+1}$.
If it then follows that $z_h^{t+1}(r')=y$ where $r'$ is the
restriction of $r$ to the strings that start with $x_{t+1}$
(and we identify such a string of length $h+1$ with the string
of length $h$ that follow the first bit) and if this
implication is true for any $t,r,h$ we say that we have
time-consistency.
\end{definition}

\begin{theorem}
Suppose that we have a semi-measure $\rho:\mathcal{X}^*\to
[0,1]$ and that we at time $0$ (given any loss $L$) predict the
next $h$ bits according to
\begin{equation}\label{genLoss}
  \argmin_{y_1\in\mathcal{X}^h}
  \sum_{y_2\in\mathcal{X}^h}L(y_1,y_2)\rho(y_2).
\end{equation}
If we furthermore assume time-consistency and observe
$x\in\mathcal{X}^*$, then we predict
\begin{equation}\label{condPred}
  \argmin_{y_1\in\mathcal{X}^h}
  \sum_{y_2\in\mathcal{X}^h}L(y_1,y_2)\rho(xy_2|x).
\end{equation}
\end{theorem}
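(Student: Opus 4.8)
The plan is to argue by induction on the length $t$ of the observed string $x=x_{1:t}$, so that the whole statement reduces to the single-bit update that the time-consistency axiom actually constrains. I would take as induction hypothesis that at time $t$, and for \emph{every} horizon $h$, the agent's prediction function is $z^t_h(L)=\argmin_{y_1\in\mathcal{X}^h}\sum_{y_2\in\mathcal{X}^h}L(y_1,y_2)\,\rho(x_{1:t}y_2\mid x_{1:t})$, where I treat $z^t_h$ as acting on a loss $L$ via $\argmin$ (equivalently on the reward structure $-L$ via $\argmax$). The base case $t=0$ is exactly the hypothesis \eqref{genLoss} of the theorem (for all $h$, using $\rho(\cdot\mid\epsilon)\propto\rho(\cdot)$), and the goal is \eqref{condPred}. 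No reappeal to the rationality theorem is needed: the time-$0$ rule is given, and the only tool is the time-consistency relation between $z^t_{h+1}$ and $z^{t+1}_h$.

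For the inductive step I fix an arbitrary loss $L$ on $\mathcal{X}^h\times\mathcal{X}^h$ to be used at time $t+1$ and set $a^*:=\argmin_{b\in\mathcal{X}^h}\sum_{c}L(b,c)\,\rho(x_{1:t+1}c\mid x_{1:t+1})$, which is the value I must show the agent predicts. The key idea is to manufacture, at time $t$, a loss $\tilde L$ on $\mathcal{X}^{h+1}\times\mathcal{X}^{h+1}$ whose restriction to strings beginning with $x_{t+1}$ equals $L$ and whose time-$t$ minimizer is $x_{t+1}a^*$. Concretely I would set $\tilde L(x_{t+1}b,\,x_{t+1}c)=L(b,c)$, $\tilde L(x_{t+1}b,\,y_2)=0$ when $y_2$ does not begin with $x_{t+1}$, and $\tilde L(y_1,y_2)=M$ for every $y_1$ not beginning with $x_{t+1}$, where $M$ is a large constant. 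The first clause guarantees that the restriction $r'$ coincides with $L$ under the identification in the time-consistency definition, so that applying the axiom will read off the time-$(t+1)$ decision under $L$ itself.

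The computation I rely on is the factorisation $\rho(x_{1:t}x_{t+1}c\mid x_{1:t})=\frac{\rho(x_{1:t+1})}{\rho(x_{1:t})}\,\rho(x_{1:t+1}c\mid x_{1:t+1})$, which says that over continuations of $x_{t+1}$ the time-$t$ conditional is a fixed \emph{positive} multiple of the time-$(t+1)$ conditional. Hence the expected time-$t$ loss of $x_{t+1}b$ is proportional to $\sum_c L(b,c)\,\rho(x_{1:t+1}c\mid x_{1:t+1})$ and is minimised exactly at $b=a^*$. Once $x_{t+1}a^*$ is shown to beat every competitor, the induction hypothesis gives $z^t_{h+1}(\tilde L)=x_{t+1}a^*$, and time-consistency then forces $z^{t+1}_h(L)=a^*$; since $L$ was arbitrary, this is precisely the claimed conditional rule, closing the induction.

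The main obstacle I anticipate is the forcing step, i.e.\ guaranteeing that the time-$t$ minimizer begins with $x_{t+1}$ rather than with some other first bit. For a genuine measure this is automatic once $M$ is large, since every $y_1$ not starting with $x_{t+1}$ then carries expected loss $M\cdot(\text{positive mass})$; for a semi-measure one must guard against escaping mass, where $\sum_c\rho(x_{1:t+1}c\mid x_{1:t+1})$ could vanish even though $\rho(x_{1:t+1})>0$. I would therefore make explicit the standing nondegeneracy assumption $\rho(x_{1:t+1})>0$ with positive continuation mass (the only regime in which the $\argmin$ in \eqref{condPred} is meaningful), place the penalty $M$ on a $y_2$ carrying positive conditional weight, and dispose of the degenerate case separately, where both predictions are vacuous.
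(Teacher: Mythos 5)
Your proof is correct, and it takes a genuinely different route from the paper's. The paper argues at the level of beliefs: it asserts that any discrepancy between the ratios $\rho(xy_1|x)/\rho(xy_2|x)$ and $\rho(xy_1)/\rho(xy_2)$ would ``obviously contradict'' time-consistency, and concludes that the posterior decision rule is expected-loss minimization under the renormalized (conditional) semi-measure; this implicitly re-invokes the rationality representation (Theorem \ref{thm:LF}) in order to speak of posterior beliefs at all, and it glosses over the fact that the axiom is stated only for single-bit updates ($t$ to $t+1$) while the theorem concerns an arbitrary observed prefix $x$. You instead work at the level of decision functions, which is what the axiom literally constrains: your induction on $t$ bridges exactly the single-bit/multi-bit gap, and your large-penalty extension loss $\tilde L$ is precisely the construction needed to make the paper's ``obviously contradicts'' step concrete --- a posterior rule other than conditioning would violate the axiom on such a $\tilde L$. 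You also handle the semi-measure degeneracies ($\rho(x_{1:t+1})=0$ or vanishing continuation mass), which the paper silently ignores. What the paper's route buys is brevity and the interpretive punchline that time-consistency means relative beliefs persist; what yours buys is self-containedness and rigor, since it uses only the time-$0$ rule and the stated axiom, with no second appeal to rationality. The one loose end, shared with the paper, is tie-breaking: when the $\argmin$ in \eqref{condPred} is not unique, your argument shows only that the time-$(t+1)$ prediction lies in the argmin set, but that is the natural set-valued reading of the theorem's conclusion.
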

\begin{proof}
Suppose that there are $y_1, y_2$ and $x$ such that
$\frac{\rho(xy_1|x)}{\rho(xy_2|x)}\neq
\frac{\rho(xy_1)}{\rho(xy_2)}$. This obviously contradicts
time-consistency. In other words, time-consistency implies that
relative beliefs in strings that are not yet contradicted
remains the same. Therefore, the decision function after seeing
$x$ can be described by a semi-measure where the inconsistent
alternatives have been ruled out and the others just
renormalized. This is what \eqref{condPred} is describing. The
only remaining point to make is that we have expressed
\eqref{genLoss} and \eqref{condPred} in terms of loss instead
of reward though it is simply a matter of changing the sign and
max for min.
\end{proof}

%%%%%%%%%%%%%%%%%%%%%%%%%%%%%%%%%%%%%%%%%%%%%%%%%%%%%%%%%%%%%%%
\section{The AIXI Agent}\label{aixi}
%%%%%%%%%%%%%%%%%%%%%%%%%%%%%%%%%%%%%%%%%%%%%%%%%%%%%%%%%%%%%%%

In this section we discuss extensions to the case where an
agent is choosing a sequence of actions that affect the
environment it is in. We will simply replace the principle that
says that we predict computable sequences by one that says that
we predict computable environments. The environments are such
that the agent takes an action that is fed to the environment
and the environment responds with an output that we call a
perception. There is a finite alphabet for the action and one
for the perception.

Our aim is to choose a policy for the agent. This is a function
from the history of the actions and perceptions that has
appeared so far, to the action which the agent chooses next.
Suppose that a class $\{\pi_i\}$ of policies, a class of (all)
computable environments $\{T_j\}$ and a reward structure
$R_{i,j}$ which is the total reward for using policy $\pi_i$ in
environment $T_j$. To assume the property that $\lim_j
R_{i,j}=0\ \forall i$, would mean that we assume that the
stakes are lower in the environments of high index. This
somewhat restrictive and there are alternatives to making this
assumption (that the reward structure is in $c_0$) and we
investigate the result of assuming that we instead have the
larger space $\ell_\infty$ (see Remark \ref{linf}) in a
separate article \cite{Sunehag11} on rationality axioms and
conclude that the difference is that we get finite additivity
instead of countable additivity for the probability measure but
that we can get back to countable additivity by adding an extra
monotonicity assumption. The arguments in Section \ref{rat}
imply (given $c_0$ reward structure) that we must assign
probabilities $\{p_j\}$ for the environment being $T_j$ and
choose a policy with index
\begin{equation}
  \argmax_i \sum_j R_{i,j}p_j.
\end{equation}
This is what the AIXI agent described in \cite{Hutter04} is
doing. The AIXI choice of weights $p_j$ correspond to the
choice $2^{-K(\nu)}$ (as in Remark \ref{unique}), but for the
class of lower semi-computable $\nu$ discussed below in Section
\ref{semi}.

The same updating technique as in Section \ref{sp}, where we
eliminate the environments which are inconsistent with what has
occurred, is being used. This is deduced from the same
time-consistency principle as for sequence prediction, just
stating that the relative belief in environments that are still
consistent will remain unchanged. This leads to the AIXI agent
from \cite{Hutter04}.

%%%%%%%%%%%%%%%%%%%%%%%%%%%%%%%%%%%%%%%%%%%%%%%%%%%%%%%%%%%%%%%
\section{Remarks on Stochastic Lower\\ Semi-computable Environments}\label{semi}
%%%%%%%%%%%%%%%%%%%%%%%%%%%%%%%%%%%%%%%%%%%%%%%%%%%%%%%%%%%%%%%

Having the belief that the environment is computable does seem
like a restrictive assumption though we will here argue that it
is in an interesting way equivalent to having beliefs over all
lower semi-computable stochastic environments. The Solomonoff
prior is based on having belief $2^{-l(p)}$ in having input
program $p$ defining the environment. We can (proven up to a
multiplicative factor in \cite{LiV90} and exact identity in
\cite{Wood11}), however, rewrite this prior as a mixture
$\sum_\nu w_\nu \nu$ over all lower semi-computable
environments $\nu$ where $w_\nu>0$ for all $\nu$. Therefore,
acting according to our Solomonoff mixture over computable
enviroments is identical to acting according to beliefs over a
much larger set of environments where we have randomness.

%%%%%%%%%%%%%%%%%%%%%%%%%%%%%%%%%%%%%%%%%%%%%%%%%%%%%%%%%%%%%%%
\section{Conclusions}\label{conclusion}
%%%%%%%%%%%%%%%%%%%%%%%%%%%%%%%%%%%%%%%%%%%%%%%%%%%%%%%%%%%%%%%

We defined four principles for universal sequence prediction
and showed that Solomonoff induction and AIXI are determined
from them. These principles are computability, rationality,
indifference and time consistency. Computability tells us that
Turing machines are the explanations we consider for what we
are seeing. Rationality tells us that we have probabilistic
beliefs over these. Time-consistency leads to the conclusion
that we update these beliefs based on conditional probability
and the principle of indifference tells us how to chose the
original beliefs based on how compactly the various Turing
machines can be implemented on the reference machine.

%-------------------------------%
\paragraph{Acknowledgement.}
%-------------------------------%
This work was supported by ARC grant DP0988049.

%%%%%%%%%%%%%%%%%%%%%%%%%%%%%%%%%%%%%%%%%%%%%%%%%%%%%%%%%%%%%%%
%%                 B i b l i o g r a p h y                   %%
%%%%%%%%%%%%%%%%%%%%%%%%%%%%%%%%%%%%%%%%%%%%%%%%%%%%%%%%%%%%%%%
\addcontentsline{toc}{section}{\refname}

\begin{small}

\end{small}


\begin{thebibliography}{ABCD}\parskip=0.5ex

\bibitem[{deF}37]{deF37}
B.~{deFinetti}.
\newblock La prevision: Ses lois logiques, ses sources subjectives.
\newblock In {\em Annales de l'Institut Henri Poincare 7}, pages 1--68. Paris,
  1937.

\bibitem[Die84]{Diestel84}
J.~Diestel.
\newblock {\em Sequences and series in $\text{Banach}$ spaces}.
\newblock Springer-Verlag, 1984.


\bibitem[Gr{\"u}07]{grunwald}
P.~Gr{\"u}nwald.
\newblock {\em The Minimum Description Length Principle}.
\newblock MIT Press Books. The MIT Press, 2007.

\bibitem[Hut05]{Hutter04}
M.~Hutter.
\newblock {\em Universal Artificial Intelligence: Sequential Decisions based on
  Algorithmic Probability}.
\newblock Springer, Berlin, 2005.

\bibitem[Hut07]{Hutter07}
M.~Hutter.
\newblock On universal prediction and {B}ayesian confirmation.
\newblock {\em Theoretical Computer Science}, 384:33--48, 2007.

\bibitem[Kre89]{Krey89}
E.~Kreyszig.
\newblock {\em Introductory Functional Analysis With Applications}.
\newblock Wiley, 1989.

\bibitem[LV08]{LiV90}
M.~Li and P.~Vit{\'a}nyi.
\newblock {\em Kolmogorov Complexity and its Applications}.
\newblock Springer, 2008.

\bibitem[NM44]{NeuMor44}
J.~Neumann and O.~Morgenstern.
\newblock {\em Theory of Games and Economic Behavior}.
\newblock Princeton University Press, 1944.

\bibitem[Ram31]{Ram31}
F.~Ramsey.
\newblock Truth and probability.
\newblock In R.~B. Braithwaite, editor, {\em The Foundations of Mathematics and
  other Logical Essays}, chapter~7, pages 156--198. Brace \& Co., 1931.

\bibitem[RH11]{Sam}
S.~Rathmanner and M.~Hutter.
\newblock A philosophical treatise of universal induction.
\newblock {\em Entropy}, 13(6):1076--1136, 2011.

\bibitem[Ris78]{rissanen78}
J.~Rissanen.
\newblock {Modeling By Shortest Data Description}.
\newblock {\em Automatica}, 14:465--471, 1978.

\bibitem[Ris10]{rissanen10}
J.~Rissanen.
\newblock Minimum description length principle.
\newblock In C.~Sammut and G.~Webb, editors, {\em Encyclopedia of Machine
  Learning}, pages 666--668. Springer, 2010.

\bibitem[Sav54]{Sav54}
L.~Savage.
\newblock {\em The Foundations of Statistics}.
\newblock Wiley, New York, 1954.

\bibitem[SB98]{SutBar}
R.~Sutton and A.~Barto.
\newblock {\em {Reinforcement Learning: An Introduction (Adaptive Computation
  and Machine Learning)}}.
\newblock The MIT Press, March 1998.

\bibitem[SH11]{Sunehag11}
P.~Sunehag and M.~Hutter.
\newblock Axioms for rational reinforcement learning.
\newblock In {\em Proc. of 22nd International Conf. on Algorithmic Learning
  Theory}, Espoo, Finland, 2011.

\bibitem[Sol60]{Sol60}
R.~Solomonoff.
\newblock {\em A Preliminary Report on a General Theory of Inductive
  Inference}.
\newblock Report V-131, Zator Co, Cambridge, Ma., 1960.

\bibitem[Sol78]{Sol78}
R.J. Solomonoff.
\newblock Complexity-based induction systems: comparisons and convergence
  theorems.
\newblock {\em IEEE Transactions on Information Theory}, 24:422--432, 1978.

\bibitem[Sol96]{Solomonoff96}
R.J. Solomonoff.
\newblock Does algorithmic probability solve the problem of induction?
\newblock In {\em Proceedings of the Information, Statistics and Induction in
  Science Conferece}, 1996.

\bibitem[Sug91]{Sugden}
R.~Sugden.
\newblock Rational choice: A survey of contributions from economics and
  philosophy.
\newblock {\em Economic Journal}, 101(407):751--85, July 1991.

\bibitem[Tur36]{Turing}
A.~M. Turing.
\newblock {On Computable Numbers, with an application to the
  Entscheidungsproblem}.
\newblock {\em Proc. London Math. Soc.}, 2(42):230--265, 1936.

\bibitem[Wal05]{wallace05}
C.S. Wallace.
\newblock {\em Statistical and Inductive Inference by Minimum Message Length}.
\newblock Springer-Verlag (Information Science and Statistics), 2005.

\bibitem[WB68]{Wallace68}
C.~S. Wallace and D.M. Boulton.
\newblock An information measure for classification.
\newblock {\em Computer Journal}, 11:185--194, 1968.

\bibitem[WD99]{Wallace99}
C.~S. Wallace and D.~L. Dowe.
\newblock Minimum message length and {K}olmogorov complexity.
\newblock {\em Computer Journal}, 42:270--283, 1999.

\bibitem[WSH11]{Wood11}
I.~Wood, P.~Sunehag, and M.~Hutter.
\newblock ({N}on-){E}quivalence of universal priors.
\newblock In {\em Proc. of Solomonoff Memorial Conference}, Melbourne,
  Australia, 2011.

\end{thebibliography}
\end{document}